%%%%%%%%%%%%%%%%%%%%%%%%%%%%%%%%%%%%%%%%%%%%%%%%%%%%%%%%%%%%%%%%%%%%%%%%%%%%%%%%
%2345678901234567890123456789012345678901234567890123456789012345678901234567890
%        1         2         3         4         5         6         7         8

\documentclass[letterpaper, 10 pt, conference]{ieeeconf}  % Comment this line out if you need a4paper

\IEEEoverridecommandlockouts                              % This command is only needed if 
                                                          % you want to use the \thanks command

\overrideIEEEmargins            
\usepackage{graphicx}
\usepackage{amsmath, amsfonts, amssymb} % assumes amsmath package installedd
\usepackage{algorithmic, algorithm}
\usepackage{acro}
\usepackage{bm}
\usepackage{siunitx}
\usepackage{subcaption}
\usepackage{booktabs}
\usepackage{multirow}
\usepackage{placeins}
\usepackage{cite}
\usepackage{hyperref}
\usepackage{caption}
\captionsetup{font=small}
\usepackage{tabularx} % Add this in the preamble

\usepackage[english]{babel}
\newtheorem{theorem}{Theorem}

\newif\ifshowtable
%\showtabletrue

\newif\ifshowfigure
\showfiguretrue% Needed to meet printer requirements.

%In case you encounter the following error:
%Error 1010 The PDF file may be corrupt (unable to open PDF file) OR
%Error 1000 An error occurred while parsing a contents stream. Unable to analyze the PDF file.
%This is a known problem with pdfLaTeX conversion filter. The file cannot be opened with acrobat reader
%Please use one of the alternatives below to circumvent this error by uncommenting one or the other
%\pdfobjcompresslevel=0
%\pdfminorversion=4

% See the \addtolength command later in the file to balance the column lengths
% on the last page of the document

% The following packages can be found on http:\\www.ctan.org
%\usepackage{graphics} % for pdf, bitmapped graphics files
%\usepackage{epsfig} % for postscript graphics files
%\usepackage{mathptmx} % assumes new font selection scheme installed
%\usepackage{times} % assumes new font selection scheme installed
%\usepackage{amsmath} % assumes amsmath package installed
%\usepackage{amssymb}  % assumes amsmath package installed

\title{\LARGE \bf Real-time Iteration Scheme for Diffusion Policy}

% A real-time iteration scheme for diffusion policy

\author{Yufei Duan$^{1}$, Hang Yin$^{2}$ and Danica Kragic$^{1}$% <-this % stops a space
\thanks{This work was supported by Knut and Alice Wallenberg Foundation and Swedish Research Council (2019-00748). This work was also partially supported by the Wallenberg AI, Autonomous Systems and Software Program (WASP) funded by the Knut and Alice Wallenberg Foundation. The computations were enabled by the supercomputing resource Berzelius provided by National Supercomputer Centre at Linköping University and the Knut and Alice Wallenberg foundation. Hang Yin acknowledges the support from Innovation Foundation Denmark under the Grand Solutions project Fast and Efficient Robotic Automation via Reuse of Data (FERA).}% <-this % stops a space
\thanks{$^{1}$Division of Robotics, Perception and Learning, KTH Royal Institude of
Technology, Stockholm,
        {\tt\small \{yufeidu, dani\}}@kth.se}%
\thanks{$^{2}$Department of Computer Science, University of Copenhagen, Copenhagen,
        {\tt\small \{hayi\}}@di.ku.dk}%
}

\begin{document}

\maketitle
\thispagestyle{empty}
\pagestyle{empty}

%%%%%%%%%%%%%%%%%%%%%%%%%%%%%%%%%%%%%%%%%%%%%%%%%%%%%%%%%%%%%%%%%%%%%%%%%%%%%%%%
\begin{abstract}

Diffusion Policies have demonstrated impressive performance in robotic manipulation tasks. However, their long inference time, resulting from an extensive iterative denoising process, and the need to execute an action chunk before the next prediction to maintain consistent actions limit their applicability to latency-critical tasks or simple tasks with a short cycle time. While recent methods explored distillation or alternative policy structures to accelerate inference, these often demand additional training, which can be resource-intensive for large robotic models. In this paper, we introduce a novel approach inspired by the Real-Time Iteration (RTI) Scheme, a method from optimal control that accelerates optimization by leveraging solutions from previous time steps as initial guesses for subsequent iterations. We explore the application of this scheme in diffusion inference and propose a scaling-based method to effectively handle discrete actions, such as grasping, in robotic manipulation. The proposed scheme significantly reduces runtime computational costs without the need for distillation or policy redesign. This enables a seamless integration into many pre-trained diffusion-based models, in particular, to resource-demanding large models. We also provide theoretical conditions for the contractivity which could be useful for estimating the initial denoising step. Quantitative results from extensive simulation experiments show a substantial reduction in inference time, with comparable overall performance compared with Diffusion Policy using full-step denoising. Our project page with additional resources is available at: {\href{https://rti-dp.github.io/}{https://rti-dp.github.io/}}

%retain the performance while drastically reducing inference time

\end{abstract}

%%%%%%%%%%%%%%%%%%%%%%%%%%%%%%%%%%%%%%%%%%%%%%%%%%%%%%%%%%%%%%%%%%%%%%%%%%%%%%%%
\section{INTRODUCTION}

Robotic manipulation has seen significant advancements enabled by diffusion models. These models~\cite{poole2022dreamfusiontextto3dusing2d, reuss2023goalconditionedimitationlearningusing, prasad2024consistencypolicyacceleratedvisuomotor,ze20243ddiffusionpolicygeneralizable}, especially Diffusion Policy (DP)~\cite{chi2024diffusionpolicyvisuomotorpolicy}, have demonstrated success in a wide range of tasks, improving control, adaptability, and generalization in complex manipulation scenarios.

However, one main drawback of diffusion models is the slow inference process. Standard diffusion models begin inference from a standard Gaussian distribution and refine it through a denoising process with hundreds of steps. This impedes high-frequency control in demanding tasks, such as contact-rich or high-speed manipulations, which require continuous and timely correction, to prevent irrecoverable errors or even outright failures. 

The inability to react quickly limits the application of diffusion-based policies in real-world robotics and bottleneck task throughput in repetitive execution. The prolonged waiting time for policies to generate actions results in sluggish, non-smooth robot movements, reducing productivity and making them significantly less responsive compared to human performance. Consequently, addressing the inference efficiency in diffusion-based policies is critical to broadening their applicability in robotic practice.
%, enabling smoother, more responsive, more robust, and more human-like performance.
\begin{figure} [t]
        \centering
        \includegraphics[width=\linewidth]{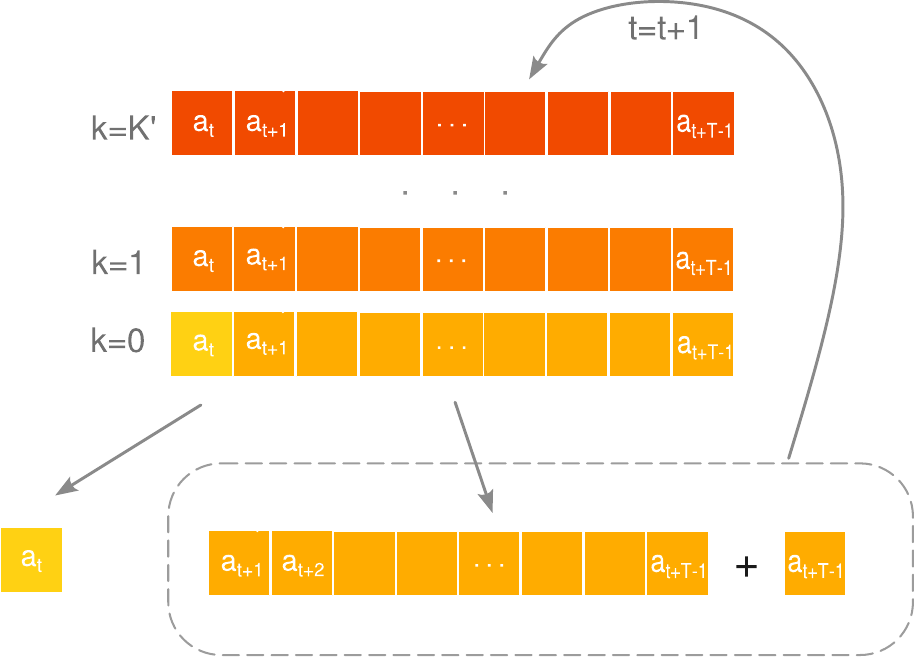}

    \caption{Illustration of Real-Time Iteration Scheme for Diffusion Policy (RTI-DP). At each time step, the predicted action sequence serves as the initial guess for the subsequent inference with truncated denoising steps. This sequence is updated by removing the executed action and appending the newly predicted action at the end, facilitating efficient and continuous policy adaptation.}
    \label{fig:enter-label}
\end{figure}

Recent works have explored various approaches to accelerating inference in diffusion models. One prominent direction involves distillation techniques~\cite{prasad2024consistencypolicyacceleratedvisuomotor}, which aim to condense the iterative denoising process into a one-step consistency model~\cite{song2023consistencymodels}. While this method achieves faster inference, they often come with drawbacks such as high computational costs during training and reduced policy quality and diversity~\cite{meng2023distillationguideddiffusionmodels}. Another line of research focuses on designing alternative policy structures~\cite{heg2024streamingdiffusionpolicyfast, chen2025responsivenoiserelayingdiffusionpolicy}, where the policy structure itself is redesigned to facilitate faster decision-making while maintaining performance. Although these methods can be effective, they are often impractical to apply to pre-trained large-scale robotic models, as they require complete retraining. Given that retraining large models is computationally expensive and resource-intensive, there is a pressing need for solutions that enable efficient policy acceleration without requiring extensive model modifications or retraining. 

%Developing such methods would allow for seamless integration into existing frameworks, making them more accessible and scalable for real-world applications.

In this paper, we propose a real-time method, Real-Time Iteration for Diffusion Policy (RTI-DP) that requires neither retraining nor distillation. Our approach is inspired by the Real-Time Iteration Scheme~\cite{rti} in optimal control, which leverages predictions from previous steps as initialization for the next step, significantly accelerating optimization. Specifically, our method first computes an initial action chunk using the same approach as the standard diffusion policy. For subsequent action chunks, it utilizes the previous one as an initial guess and applies denoising on top of it with a reduced number of time steps. A key insight of our approach is the importance of initialization, which stems from the spatiotemporal consistency of physical systems. This consistency implies that actions exhibit continuity and bounded changes over time, ensuring that initialization remains effective across consecutive action steps. As a result, RTI serves as a natural extension for diffusion policy inference. Theoretical results can be derived from the consistency assumption, stating with a well-chosen initial guess can efficiently converge to a high-quality prediction demanding hundreds of denoising steps. This analysis aids in identifying the optimal initial denoising step to maintain high inference quality.%The prediction from previous timesteps serves as a strong initial guess because our approach exploits the temporal and spatial consistency properties of physical systems, ensuring smooth and coherent motion across adjacent timesteps.

Our contribution can be summarized as follows:
\begin{itemize}
    \item We propose a real-time inference method that leverages the initial guess from the previous prediction, enabling direct application to pre-trained models.
    \item We provide theoretical conditions for contractive errors by infererring from the proposed initial guess.
    \item We show the proposed method can empirically boost the real-time performance in various manipulation tasks through simulation experiments.
\end{itemize}
    % \item We evaluate and compare different scheduler methods?

% We first introduce the background 

\section{RELATED WORK}

Diffusion models are capable of representing intricate multimodal distributions while maintaining robust training stability and resistance to hyperparameter fluctuations. They have been driven widespread use in robotics, spanning areas such as motion 
planning~\cite{janner2022planningdiffusionflexiblebehavior}, imitation learning~\cite{chi2024diffusionpolicyvisuomotorpolicy, prasad2024consistencypolicyacceleratedvisuomotor,reuss2023goalconditionedimitationlearningusing}, and grasp prediction~\cite{weng2024dexdiffusergeneratingdexterousgrasps}. Many current methods depend on generating trajectories step-by-step using full denoising from white noise, a characteristic feature of Denoising Diffusion Probabilistic Models (DDPMs)~\cite{ho2020denoisingdiffusionprobabilisticmodels}. However, implementing these approaches in real-world applications incurs significant computational overhead, posing challenges for practical deployment.

To address inference inefficiencies, recent work on consistency-based models has demonstrated significant speed improvements for generative models. Consistency Policy (CP)~\cite{prasad2024consistencypolicyacceleratedvisuomotor}, built upon Consistency Trajectory Models~\cite{kim2024consistencytrajectorymodelslearning}, was originally designed to accelerate image generation and has since been adapted to robotic policy learning. These models achieve performance comparable to Diffusion Policy while being significantly faster. However, a key drawback of consistency-based methods is their difficulty in preserving multimodal action distributions and maintaining stable training~\cite{prasad2024consistencypolicyacceleratedvisuomotor}.

Another approach to reducing inference time in diffusion models involves minimizing the number of denoising steps. Unlike DDPM’s stochastic solver, DDIM~\cite{song2022denoisingdiffusionimplicitmodels} reformulates the process as a deterministic ODE, allowing models to be trained with a large number of denoising steps but evaluated with fewer during inference. Similarly, EDM~\cite{karras2022elucidatingdesignspacediffusionbased} refines this strategy with modifications to preconditioning and weighting. However, despite their efficiency, adaptive step-size techniques like DDIM and EDM often compromise sample quality when the number of denoising steps is reduced~\cite{karras2022elucidatingdesignspacediffusionbased}.

Another line of research involves modifying the policy structure. Streaming Diffusion Policy (SDP)~\cite{heg2024streamingdiffusionpolicyfast} proposes a method generating a partially denoised action chunk while ensuring that the immediate next action is fully denoised. Similarly, ~\cite{chen2025responsivenoiserelayingdiffusionpolicy} introduces action chunks with varying noise levels, enabling future actions to be progressively denoised, thereby reducing the need for iteratively denoising at one prediction. Also,~\cite{reuss2023goalconditionedimitationlearningusing} provides a method to achieve 3-step denoising with proper choice of training and sampling algorithm.

On the other hand, a substantial body of research in optimal control has centered on methods that accelerate iterative solvers by providing high-quality initial guesses for each new timestep—commonly referred to as warm-starting. For instance, in Model Predictive Control (MPC), solutions from preceding timesteps can be reused to reduce computational overhead~\cite{rao1998application}. A notable contribution in this area is the Real-Time Iteration (RTI) scheme~\cite{rti}, which processes an approximate solution offline and refines it online using updated system information.

%Many large-scale robotic models, such as Octo~\cite{octomodelteam2024octoopensourcegeneralistrobot}, utilize diffusion-based policies to achieve versatile and adaptive behaviors. However, these models are computationally intensive to train, often requiring substantial datasets and resources. This complexity makes retraining or modifying them challenging. Consequently, developing methods to improve their performance without necessitating additional training is nontrivial.

\section{REAL-TIME ITERATION SCHEME FOR DIFFUSION POLICY}

\subsection{Preliminaries}
The Denoising Diffusion Probabilistic Model~\cite{ho2020denoisingdiffusionprobabilisticmodels} employs a forward noising process that gradually adds Gaussian noise, generating a sequence of states that, in the diffusion-policy setting, represent the action $\mathbf{A}$:
\begin{equation}
    \mathbf{A}_k = \sqrt{\bar{\alpha}_k} \mathbf{A}_0 + \sqrt{1 - \bar{\alpha}_k} \boldsymbol{\epsilon},
\end{equation}
where $\bar{\alpha}_k = \Pi^t_{s=1} \alpha_s$ is the noise schedule and $\epsilon \sim \mathcal{N}(0, \mathbf{I})$. To reverse this process, the model learns a conditional denoising procedure that inverts this sequence:
\begin{equation}
\begin{aligned}
    \mathbf{A}_{k-1} = \frac{1}{\sqrt{\alpha_k}}(\mathbf{A}_{k} - \frac{\beta_k}{\sqrt{1-\bar{\alpha}_k}}\epsilon_\theta(\mathbf{A}_{k},k, \mathbf{O})) + \sigma_k\mathbf{Y}, \\
    ~\mathbf{Y}\sim \mathcal{N}(0, \mathbf{I}),
\end{aligned}
\end{equation}
where $\beta_t=1-\alpha_t$; $\mathbf{O}$ denotes the observation on which the diffusion process is conditioned; and $\epsilon_\theta(\mathbf{A}_{k}, \mathbf{O},k)$ is the learned function.

Conventionally, the reverse process starts from pure Gaussian noise, but this approach requires many denoising steps to recover the underlying data, leading to high computational costs and increased inference latency. 

By contrast, in the control domain, RTI~\cite{rti} is an efficient strategy for Nonlinear Model Predictive Control (NMPC) in time-critical applications. Traditional NMPC requires solving a nonlinear optimization problem at each sampling instant, which can be computationally prohibitive for large systems with fast dynamics, similar to diffusion models. The RTI scheme addresses this challenge through a single-iteration optimization strategy that leverages the similarity between consecutive control problems to achieve real-time performance.

Specifically, at each time step $t_k$, RTI formulates the NMPC problem as a nonlinear program (NLP):
\begin{equation}
    \begin{aligned}
        \min_{\mathbf{z}_{k:N}, \, \mathbf{u}_{k:N-1}} & \quad \sum_{i=k}^{N-1} \ell(\mathbf{z_i}, \mathbf{u_i}) + \ell_f(\mathbf{z_N}) \\
        \text{s.t.} & \quad \mathbf{z_{i+1}} = f(\mathbf{z_i}, \mathbf{u_i}), \quad i=k,\dots,N-1 \\
                    % & \quad g(\mathbf{z_i}, \mathbf{u_i}) \leq 0, \quad 
                    & \quad \mathbf{z_k} = \mathbf{x}(t_k)
    \end{aligned}
\end{equation}
where $\mathbf{\{z_i\}}$ is the state trajectory and $\mathbf{\{u_i\}}$ is the control sequence. One of the keys to RTI's computational efficiency lies in its initialization procedure. Rather than starting from arbitrary initial guesses, RTI employs:

\begin{enumerate}
    \item \textbf{Preparation:} RTI first computes an initial guess across the entire horizon and prepares the offline Jacobian matrix and the gradient vector.
    \item \textbf{Feedback response:} At each time step, RTI computes the incremental step changes and applies the resulting control to the real system.
    \item \textbf{Transition:} RTI updates the next initial guess by incorporating the step changes and shrinking the prediction horizon by one.
\end{enumerate}
 
By leveraging this scheme, computation time can be reduced by a factor of hundreds to thousands. 

Diffusion Policy and NMPC share fundamental similarities when interacting with physical controllable systems. Control methods operating in the real world must inherently respect the spatio-temporal consistency, as robots cannot change states discontinuously across space or time. 

\textbf{Spatiotemporal Consistency} in robot visuomotor policies is the property that a robot's trajectory evolves smoothly over time and in space. It ensures that state transitions are bounded in magnitude, such that the difference between the state at one time step and the next will not be arbitrarily large. 
% This stipulates that the trajectory evolves smoothly over time, without abrupt changes between consecutive timesteps.

Formally, let \( x_t^{\phi} \in \Phi \) denote the trajectory’s configuration expressed in a smooth manifold $\Phi$. The consistency constraint is expressed as:
\begin{equation}
  d( x^{\phi}_{t+1}, x^{\phi}_t) \;\le\; L,
  \quad \forall t \in \{0,1,\dots\},
\end{equation}
where $d(\cdot,\cdot)$ is a distance metric defined on the manifold $\Phi$, and \( L \) is a Lipschitz-like constant that bounds the maximum allowable change in the robot’s state between timesteps. This stipulates that the trajectory evolves continuously and smoothly over time in space, without abrupt changes between consecutive timesteps.

By harnessing the \emph{spatiotemporal} consistency properties, both Diffusion Policy and NMPC can leverage action guesses that effectively build on previous predictions. However, key differences remain: diffusion models evolve according to stochastic differential equations (SDEs), whereas NLP-based NMPC solves ordinary differential equations (ODEs). As a result, it requires distinct approaches for handling NMPC and Diffusion Policy. 

In the following section, we introduce Real-Time Scheme for Diffusion Policy, providing an analysis about its effectiveness on retaining performance with fewer denoising steps.

\subsection{Method}
Our method begins by computing the first action prediction using standard Diffusion Policy with full denoising. As this initial step is performed offline, computation time is not a primary concern. For subsequent predictions, we treat the outcome of the previous prediction as an initial guess and apply a reduced number of denoising steps to refine it. This process is repeated iteratively, with each new prediction building upon the last, enabling the policy to operate efficiently in real time—one step at a time—while maintaining fast inference.

\begin{algorithm}
\begin{algorithmic}
\caption{Real-time Iteration for Diffusion Policy}

\REQUIRE $M_\theta$: denoising model, $O_t$: observation at time $t$, $A_t = [a_t, a_{t+1}, \dots, a_{t+T-1}]$: action chunk at time $t$ with an action horizon of $T$, $G \sim \mathcal{N}(0, I)$: initial guess sampled from a standard Gaussian distribution, $K$: denoising steps
\STATE $t \gets 0$

\WHILE{NOT FINISHED}
\IF{$t=0$}
\STATE \COMMENT{Compute the initial action chunk with full-step denoising}
\FOR{$k \in FullDenoisingSteps$}
    \STATE $G \gets M_\theta(O_t, G,k)$
\ENDFOR
\STATE $A_t \gets G$
\ELSE
\STATE \COMMENT{Iterative refinement using previous predictions}
    \STATE $G \gets [a_{t+1}, a_{t+2}, \dots, a_{t+T-1}, a_{t+T-1}]$  \COMMENT{Shift and repeat last action}

    \FOR{$k \in K$}
        \STATE $G \gets M_\theta(O_t, G, k)$
    \ENDFOR
    \STATE $A_t \gets G$
\ENDIF
    \STATE Execute $a_t$
    \STATE $t \gets t + 1$
\ENDWHILE

\end{algorithmic}
\end{algorithm}

\subsubsection{Discrete action space}
Our method is based on the fundamental assumption of spatiotemporal consistency in action states, meaning that consecutive timesteps exhibit bounded changes in actions. However, this assumption may be violated, particularly in scenarios involving discrete actions, such as a binary command of grasping. In such cases, abrupt transitions can pose challenges for diffusion-based approaches, potentially leading to corrupted predictions.

To address this issue, we introduce a scaling factor to transform the discrete dimensions of data, allowing the change of discrete actions to be better captured within a continuous representation. This transformation enables the diffusion model to smoothly interpret and denoise actions, preserving trajectory consistency while retaining the expressiveness of the original discrete signals.

We explore two strategies:
\begin{itemize}
    \item \textbf{For pre-trained policies}, we directly scale the discrete \textit{initial guess} by a factor that remains manageable within the denoising steps. For example, for an initial guess of a discrete action $g_d$, $g_d = \frac{g_d}{10}$.
    \item \textbf{For retrainable tasks}, a more effective approach is to preprocess the discrete actions in the \textit{dataset} using a scaling factor, ensuring robust performance in handling discrete actions. For example, the discrete action $a_d$ in the dataset is changed to $\frac{a_d}{10}$.
\end{itemize}

\subsection{Why is initialization important?}
Diffusion models reveal the data distribution by employing a reverse diffusion process that starts from a predefined prior distribution, referred to here as the initialization. Typically, the prior is chosen as a unimodal Gaussian distribution due to its mathematical simplicity and analytical tractability. Since a Gaussian prior does not encode information about specific data modes, it provides a mode-agnostic initialization. Then the learned reverse diffusion process can effectively guide noisy samples toward diverse modes within the data distribution, facilitating flexible adaptation of the robot to various potential actions during inference. While this multimodality is beneficial for generating diverse behaviors, it is not always favorable during real-time execution, where consistency and stability are critical. As the policy execution progresses, the mode naturally collapses, as a system cannot follow multiple modes simultaneously. With a limited conditional context, the policy may oscillate between different modes, potentially becoming trapped in conflicting movement decisions instead of progressing toward the goal. This phenomenon is also observed in DP experiments, where performance deteriorates as the action horizon decreases from its optimal value~\cite{chi2024diffusionpolicyvisuomotorpolicy}. Therefore, during the robot execution, starting inference from a point near the same mode can alleviate this issue, reduce the chance of mode switch and keep consistency.

Another challenge arising from an uninformative initialization is the difficulty in achieving a fully closed-loop system while maintaining consistency. This challenge is exacerbated by the absence of mode consistency between consecutive predictions. To address this, in DP, an action chunk must be predicted to ensure a sequence of consistent actions. This design choice inherently results in only partially closed-loop behavior. This limitation stems from their open-loop operation between predictions, where a sequence of actions is generated from a single observation and executed without intermediate feedback. This design represents a trade-off necessitated by the long prediction time and the need to preserve mode consistency. Consequently, closing the loop becomes challenging, despite its fundamental importance for real-time control. 

Our method preserves multimodal capabilities by selecting the mode in the initial step based on the standard diffusion policy while ensuring stable execution through predictions from previous time steps, thereby maintaining consistency with local behavior modes. To achieve a fully closed-loop system, we adopt a one-action-per-prediction strategy, enabling continuous feedback without compromising performance. This approach maintains real-time adaptability while ensuring computational efficiency.

\subsection{Local Contractivity of Real-Time Iterations}
Assume we have a forward chain of actions $\{{\mathbf{A}_k}\}$ generated from the clean ${\mathbf{A}_0}$. Suppose we do not start from $\{{\mathbf{A}}_K\}$ in the reverse process, but from an intermediate step $K'<K$ with some $\mathbf{{A}}_{K'}$. Following the spatiotemporal consistency inherent in physical systems, the actions predicted from the observation $\mathbf{O}_t$ tend to closely ensemble those predicted at a previous time step with $\mathbf{O}_{t-1}$ if they belong to the same behavioral mode. Consequently, the estimated $\mathbf{\tilde{A}}_{K'}$ remains close to the true $\mathbf{A}_{K'}$, enabling the reverse chain to stay near the original trajectory and ultimately converge toward $\mathbf{A}_0$. 

Since denoising follows a Markov chain structure, each step depends only on the preceding one. For a contractive Markov, small errors are corrected rather than accumulated, ensuring stability.

\begin{theorem}[Local Contractivity in RTI-DP]
Consider a denoising diffusion probabilistic model (DDPM) with a total of $K$ steps and a noise schedule for the variance $\beta_k$. Let $\mathbf{A}_0$ be a clean data sample, and let $\mathbf{A}_k$ be the corresponding noisy sample at step $k$ obtained via the forward diffusion process and $\mathbf{O}$ is the observation on which the diffusion process is conditioned. Assume the learned  denoising function $\epsilon_\theta(\mathbf{A},k, \mathbf{O})$ is L-Lipschitz in $\mathbf{A}$ for all $k$.

Suppose we initialize the reverse diffusion process at step $K'<K$ with an estimate $\tilde{\mathbf{A}}_{K'}$ such that $\|\tilde{\mathbf{A}}_{K'} - \mathbf{A}_{K'}\|$ is small. Then, after running the reverse process from step $K'$ down to $0$, the final recovered sample $\tilde{\mathbf{A}}_0$ satisfies:
\begin{equation}
    \|\tilde{\mathbf{A}}_0 - \mathbf{A}_0\| \leq C(K') \|\tilde{\mathbf{A}}_{K'} - \mathbf{A}_{K'}\|,
\end{equation}
where $C(K')=\prod_{k=1}^{K'} c_k$. $c_k$ is a constant dependent on the number of remaining steps $k$, the Lipschitz constant of the learned function and the noise schedule.

For well-behaved noise schedules (e.g., squared cosine), $c(k) < 1$ for all k. This ensures $C(K')<1$, making the reverse process contractive. Consequently, small initialization errors at step $K'$ decay exponentially, guaranteeing ${\mathbf{\tilde{A}}_0}$ stays close to $\mathbf{A}_0$.

\end{theorem}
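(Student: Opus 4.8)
The plan is to establish the bound by analyzing how a single reverse-diffusion step propagates the error between the perturbed trajectory $\tilde{\mathbf{A}}_k$ and the true trajectory $\mathbf{A}_k$, and then composing these per-step estimates across all $K'$ steps. First I would define the per-step error $e_k := \tilde{\mathbf{A}}_k - \mathbf{A}_k$ and write out the deterministic part of the reverse update from the preliminaries, namely $\mathbf{A}_{k-1} = \tfrac{1}{\sqrt{\alpha_k}}\bigl(\mathbf{A}_k - \tfrac{\beta_k}{\sqrt{1-\bar{\alpha}_k}}\,\epsilon_\theta(\mathbf{A}_k,k,\mathbf{O})\bigr) + \sigma_k \mathbf{Y}$. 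The key observation is that when both chains are driven by the \emph{same} realization of the stochastic term $\mathbf{Y}$ (a coupling argument), the noise injection cancels in the difference, so $e_{k-1}$ depends on $e_k$ only through the deterministic map.

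Next I would subtract the two update equations to obtain
\begin{equation}
  e_{k-1} = \frac{1}{\sqrt{\alpha_k}}\Bigl(e_k - \frac{\beta_k}{\sqrt{1-\bar{\alpha}_k}}\bigl(\epsilon_\theta(\tilde{\mathbf{A}}_k,k,\mathbf{O}) - \epsilon_\theta(\mathbf{A}_k,k,\mathbf{O})\bigr)\Bigr),
\end{equation}
and then apply the triangle inequality together with the $L$-Lipschitz assumption $\|\epsilon_\theta(\tilde{\mathbf{A}}_k,k,\mathbf{O}) - \epsilon_\theta(\mathbf{A}_k,k,\mathbf{O})\| \le L\,\|e_k\|$. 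This yields the one-step contraction estimate $\|e_{k-1}\| \le c_k \|e_k\|$ with the explicit constant
\begin{equation}
  c_k = \frac{1}{\sqrt{\alpha_k}}\Bigl(1 + \frac{\beta_k\,L}{\sqrt{1-\bar{\alpha}_k}}\Bigr),
\end{equation}
which, as claimed, depends only on the noise schedule ($\alpha_k,\beta_k,\bar{\alpha}_k$) and the Lipschitz constant $L$. Telescoping this bound from $k=K'$ down to $k=1$ gives $\|e_0\| \le \bigl(\prod_{k=1}^{K'} c_k\bigr)\|e_{K'}\| = C(K')\,\|e_{K'}\|$, which is exactly the claimed inequality.

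The final step is to argue the contractivity claim $c_k < 1$ for well-behaved schedules. Here I would substitute the schedule relations and show that for schedules such as the squared-cosine one, the factor $\tfrac{1}{\sqrt{\alpha_k}}$ staying close to $1$ is more than compensated by the smallness of $\beta_k$ and the growth of $\sqrt{1-\bar{\alpha}_k}$ at the relevant steps, forcing the product $c_k$ below unity; the exponential decay of $C(K')$ then follows immediately from $C(K') = \prod c_k$ being a product of sub-unit factors. I expect the main obstacle to be precisely this last part: the one-step Lipschitz chain is routine, but $c_k<1$ is \emph{not} automatic — the prefactor $1/\sqrt{\alpha_k}$ exceeds $1$, so the bound genuinely requires $\tfrac{\beta_k L}{\sqrt{1-\bar{\alpha}_k}} < \sqrt{\alpha_k}-1$, and verifying this for a concrete schedule requires care about which steps $k$ dominate (early steps with $\bar{\alpha}_k$ near $1$ make $\sqrt{1-\bar{\alpha}_k}$ small and are the dangerous ones). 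I would therefore treat the condition $c_k<1$ as a schedule-dependent hypothesis, verify it numerically or asymptotically for the squared-cosine case, and state clearly that $L$ must be moderate for the contractivity to hold, since a large Lipschitz constant can push individual $c_k$ above $1$ regardless of the schedule.
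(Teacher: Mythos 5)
Your overall strategy---couple the two reverse chains on the same noise realization $\mathbf{Y}$, subtract the updates, apply the $L$-Lipschitz bound on $\epsilon_\theta$ to get a one-step factor $c_k$, and telescope from $K'$ down to $0$---is exactly the route the paper takes (the paper phrases the per-step bound as a Lipschitz estimate on the posterior mean $\mu_k$ and passes to expectations rather than coupling explicitly, but the mechanics are the same). Your derived constant $c_k = \tfrac{1}{\sqrt{\alpha_k}}\bigl(1 + \tfrac{\beta_k L}{\sqrt{1-\bar{\alpha}_k}}\bigr)$ is in fact the one that genuinely follows from the triangle inequality; the paper instead states $c_k = 1 + \tfrac{\sqrt{\bar{\alpha}_k}\,\beta_k L}{\sqrt{(1-\bar{\alpha}_k)\alpha_k}}$ without derivation, and the two do not agree. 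So up to and including the telescoping step your write-up is, if anything, more careful than the paper's.

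The genuine gap is the final contractivity claim, and you have correctly smelled it but proposed a resolution that cannot work. Your own condition for $c_k<1$ is $\tfrac{\beta_k L}{\sqrt{1-\bar{\alpha}_k}} < \sqrt{\alpha_k}-1$, and since $\alpha_k = 1-\beta_k < 1$ the right-hand side is negative: the inequality is vacuously false for every $k$, every $L>0$, and \emph{every} noise schedule, squared-cosine included. There is nothing to verify numerically or asymptotically---any bound obtained by the triangle inequality is of the form $c_k = \tfrac{1}{\sqrt{\alpha_k}}(1+\text{nonnegative}) > 1$, so the product $C(K')$ grows rather than decays. (The paper's own $c_k$ suffers the identical defect: it is $1$ plus a positive term, yet the paper asserts $c_k<1$ ``for well-designed schedulers.'') To actually obtain contraction one needs the score term to \emph{pull the two trajectories together} rather than merely not push them apart too fast, i.e.\ a one-sided Lipschitz or monotonicity assumption such as $\langle \epsilon_\theta(\tilde{\mathbf{A}}_k,k,\mathbf{O})-\epsilon_\theta(\mathbf{A}_k,k,\mathbf{O}),\, \tilde{\mathbf{A}}_k-\mathbf{A}_k\rangle \ge m\|\tilde{\mathbf{A}}_k-\mathbf{A}_k\|^2$ with $m>0$ large enough that the negative sign in front of the $\epsilon_\theta$ difference produces a factor below $\sqrt{\alpha_k}$. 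That is a strictly stronger hypothesis than $L$-Lipschitz continuity, and neither your proposal nor the paper supplies it; you should either add it explicitly or downgrade the conclusion to the (correct but weaker) statement that the error grows at most by the factor $\prod_k c_k$.
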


\begin{proof}

Under standard DDPM assumptions, the reverse process of DDPM is a Markov chain. If we denote the true posterior by
\begin{equation}
\begin{aligned}
    q(& \mathbf{A}_{k-1}\mid \mathbf{A}_k, \mathbf{A}_0, \mathbf{O}) \\
     & \;\propto\; q(\mathbf{A}_k \mid \mathbf{A}_{k-1}, \mathbf{A}_0, \mathbf{O})\, q(\mathbf{A}_{k-1}\mid \mathbf{A}_0, \mathbf{O}),
\end{aligned}
\end{equation}
it is a Gaussian with mean
\begin{equation}\label{eq:posterior-mean-final}
   \mu_k(\mathbf{A}_k, \mathbf{A}_0, \mathbf{O}) \;=\;
   \frac{1}{\sqrt{\alpha_k}}\left(\mathbf{A}_{k} - \frac{\beta_k}{\sqrt{1-\bar{\alpha}_k}}\epsilon_\theta(\mathbf{A}_{k},k,\mathbf{O})\right)
\end{equation}
and the standard deviation of ${\sigma}_k = \frac{1 - \bar{\alpha}_{k-1}}{1 - \bar{\alpha}_k}\,\beta_k$.

Assuming the learned denoising function, $\epsilon_\theta(\mathbf{A},k, \mathbf{O})$ is L-Lipschitz in $\mathbf{A}$, that is
\begin{equation}
\lVert \epsilon_\theta(\mathbf{A}_k, k, \mathbf{O}) - \epsilon_\theta(\tilde{\mathbf{A}}_k, k, \mathbf{O})  \rVert \leq L \lVert \mathbf{A}_k - \tilde{\mathbf{A}}_k \rVert.
\end{equation}

This implies that $\mu_k(\mathbf{A}_k)$ is Lipschitz continuous. For well-designed schedulers, $c_k<1$.
\begin{equation}
    \|\mu_k(\tilde{\mathbf{A}}_k) - \mu_k(\mathbf{A}_k)\| \leq c_k \|\tilde{\mathbf{A}}_k - \mathbf{A}_k\|.
\end{equation}
where $
    c_k = \frac{{\sqrt{(1-\bar{\alpha}_k)\alpha_k}+\sqrt{\bar{\alpha}_{k}}\,\beta_kL}}{\sqrt{(1-\bar{\alpha}_k)\alpha_k}}\ $.
Thus, we can get
\begin{equation}
       \|\tilde{\mathbf{A}}_{k-1} - \mathbf{A}_{k-1}\|
   \;\le\; c_k \,\|\tilde{\mathbf{A}}_k - \mathbf{A}_k\|.
\end{equation}

% Since the stochastic noise component $\Sigma_t^{1/2} z_t$ is shared when comparing two close trajectories (by coupling the same noise realization $z_t$), the error contraction persists at every step, the error contraction persists at every step.

Define the error at step $k$ as $\boldsymbol{\delta}_k = \| \mathbf{\tilde{A}}_k - \mathbf{A}_k\|$. From the previous step’s contraction property, we get
\begin{equation}
    \mathbb{E}[\boldsymbol{{\delta}}_{k-1}] \le c_k \mathbb{E}[\boldsymbol{\delta}_k].
\end{equation}
Then iterating from \( k=K' \) down to \( k=0 \), we obtain:
\begin{equation}
    \mathbb{E}[\boldsymbol{\delta}_0] \leq \left(\prod_{k=1}^{K'} c_k\right) \mathbb{E}[\boldsymbol{\delta}_K].
\end{equation}

Let $C(K') = \prod_{k=1}^{K'} c_k$. For $c_k<1$, $C(K')$ decays exponentially with $K'$, proving contractivity. 
\end{proof}

The contractivity property ensures
\begin{itemize}
    \item \textbf{Stability}:
    Small initialization errors $\boldsymbol{\delta}$ shrink to negligible errors in $\mathbf{A}_0$.
    \item \textbf{Consistency}:
    All reverse processes starting near $\mathbf{A}_{K'}$ map to the same $\mathbf{A}_0$.
\end{itemize}

\begin{figure*}
\vspace{0.8cm} % 
        \centering
        \includegraphics[width=0.8\linewidth]{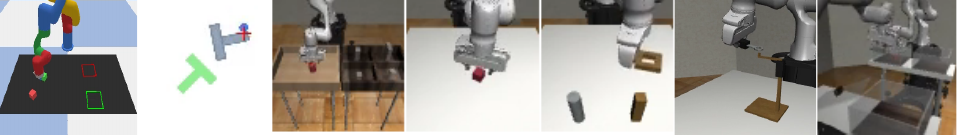}

    \caption{Illustrations of Simulation Experiments. Left to Right: Blockpush PushT, Can, Lift, Square, Tool Hang and Transport.}
    \label{fig:enter-label}
\end{figure*}

\begin{table*}[]
\resizebox{\textwidth}{!}{%
\begin{tabular}{@{}ccccccccccccc@{}}
\toprule
\multirow{2}{*}{Policy} & \multicolumn{2}{c}{Push-T}                                                                           & \multicolumn{2}{c}{Can}                                                             & \multicolumn{2}{c}{Lift}                                                            & \multicolumn{2}{c}{Square}                                                          & \multicolumn{2}{c}{ToolHang}                                                        & \multicolumn{2}{c}{Transport}                                                       \\ \cmidrule(l){2-13} 
                        & Score                               & \begin{tabular}[c]{@{}c@{}}Prediction\\ Time (ms)\end{tabular} & Score              & \begin{tabular}[c]{@{}c@{}}Prediction\\ Time (ms)\end{tabular} & Score              & \begin{tabular}[c]{@{}c@{}}Prediction\\ Time (ms)\end{tabular} & Score              & \begin{tabular}[c]{@{}c@{}}Prediction\\ Time (ms)\end{tabular} & Score              & \begin{tabular}[c]{@{}c@{}}Prediction\\ Time (ms)\end{tabular} & Score              & \begin{tabular}[c]{@{}c@{}}Prediction\\ Time (ms)\end{tabular} \\ \midrule
DP                      & 0.92/\textbf{0.91}                  & 816                                                            & \textbf{1.00}/0.98 & 833                                                            & \textbf{1.00/1.00} & 820                                                            & \textbf{0.98/0.97} & 828                                                            & \textbf{0.62}/0.55 & 829                                                            & 0.86/\textbf{0.83} & 823                                                            \\
SDP                     & 0.86/0.86                           & 438                                                            & \textbf{1.00/0.99} & 390                                                            & \textbf{1.00}/0.96 & 393                                                            & 0.94/0.87 & 465                                                            & \textbf{0.62}/0.57 & 391                                                            & \textbf{0.90}/0.81 & 392                                                            \\
RTI-DP-clip(ours)          & \multirow{2}{*}{\textbf{0.95/0.91}} & \multirow{2}{*}{\textbf{25}}                                   & \textbf{1.00/0.99} & 116                                                   & \textbf{1.00}/0.99 & 100                                                   & 0.92/0.89 & 123                                                   & 0.60/0.54          & 130                                                   & 0.80/0.69          & 145                                                   \\
RTI-DP-scale(ours)         &                                     &                                                                & \textbf{1.00}/0.93 & \textbf{100}                                                   & \textbf{1.00/1.00} & \textbf{36}                                                    & 0.92/0.90          & \textbf{78}                                                    & \textbf{0.62/0.58} & \textbf{25}                                                    & 0.86/0.77          & \textbf{56}                                                    \\ \bottomrule
\end{tabular}%
}
\caption{State-based Simulation Experiment Results (Max/Average Performance) of DP, SDP, RTI-DP.}
\label{table_state}
\end{table*}

\begin{table*}[]
\resizebox{\textwidth}{!}{%
\begin{tabular}{@{}ccccccccccccc@{}}
\toprule
\multirow{2}{*}{Policy} & \multicolumn{2}{c}{Push-T}                                                                  & \multicolumn{2}{c}{Can}                                                             & \multicolumn{2}{c}{Lift}                                                            & \multicolumn{2}{c}{Square}                                                          & \multicolumn{2}{c}{ToolHang}                                                        & \multicolumn{2}{c}{Transport}                                                       \\ \cmidrule(l){2-13} 
                        & Score                      & \begin{tabular}[c]{@{}c@{}}Prediction\\ Time (ms)\end{tabular} & Score              & \begin{tabular}[c]{@{}c@{}}Prediction\\ Time (ms)\end{tabular} & Score              & \begin{tabular}[c]{@{}c@{}}Prediction\\ Time (ms)\end{tabular} & Score              & \begin{tabular}[c]{@{}c@{}}Prediction\\ Time (ms)\end{tabular} & Score              & \begin{tabular}[c]{@{}c@{}}Prediction\\ Time (ms)\end{tabular} & Score              & \begin{tabular}[c]{@{}c@{}}Prediction\\ Time (ms)\end{tabular} \\ \midrule
DP                      & \textbf{0.92/0.89}         & 1008                                                           & \textbf{1.00/0.98} & 820                                                            & \textbf{1.00/1.00} & 796                                                            & \textbf{0.96}/0.93 & 814                                                            & \textbf{0.86/0.82} & 825                                                            & \textbf{0.92/0.90} & 833                                                            \\
SDP                     & 0.80/0.78                  & 520                                                            & 0.98/0.98          & 474                                                            & \textbf{1.00}/0.99 & 459                                                            & \textbf{0.96}/0.94 & 465                                                            & 0.02/0.01          & 473                                                            & 0.86/0.85          & 531                                                            \\
CP                      & 0.69/0.65                  & \textbf{18}                                                    & 0.94/0.93          & \textbf{16}                                                    & \textbf{1.00}/0.99 & \textbf{16}                                                    & 0.86/0.84 & \textbf{16}                                                    & 0.23/0.20          & \textbf{16}                                                    & 0.88/0.83          & \textbf{24}                                                    \\
RTI-DP-clip(ours)              & \multirow{2}{*}{0.88/0.86} & \multirow{2}{*}{36}                                            & 0.92/0.90          & 118                                                            & \textbf{1.00/1.00} & 131                                                            & 0.88/0.80          & 134                                                            & 0.66/0.62          & 153                                                            & 0.88/0.81          & 110                                                            \\
RTI-DP-scale(ours)              &                            &                                                                & \textbf{1.00}/0.93 & 100                                                            & \textbf{1.00/1.00} & 36                                                             & 0.86/0.85          & 182                                                            & 0.70/0.68          & 56                                                             & 0.86/0.77          & 56                                                             \\ \bottomrule
\end{tabular}%
}
\caption{Image-based Simulation Experiment Results (Max/Average Performance) of DP, SDP, CP, RTI-DP.}
\label{table_image}
\end{table*}

% Please add the following required packages to your document preamble:
% \usepackage{booktabs}
% \usepackage{multirow}
% \usepackage{graphicx}

\begin{table}[]
\centering
\resizebox{0.8\columnwidth}{!}{%
\begin{tabular}{@{}cccc@{}}
\toprule
Policy  & p1        & p2        & \begin{tabular}[c]{@{}c@{}}Prediction \\ Time (ms)\end{tabular} \\ \midrule
DP-DDPM & 0.44/0.37 & 0.18/0.17 & 818                                                             \\
SDP     & 0.32/0.3  & 0.16/0.11 & 391                                                             \\
RTI-DP(ours)    & \textbf{0.54/0.45} & \textbf{0.24/0.19} & \textbf{25}                                                     \\ \bottomrule
\end{tabular}%
}
\caption{Simulation Result of Blockpush (Max/Average Performance) of DP, SDP, RTI-DP.}\label{table_block}

\end{table}

\subsubsection*{\textbf{Remark} On Estimating the Step \( K' \)}
In the current implementation, \( K' \) is chosen empirically. 
However, the contractivity property established above offers a theoretical foundation for choosing \( K' \) to automate the selection, striking a balance between denoising steps and errors. Possibly, one could estimate \( K' \) offline by running the full denoising steps to obtain \( \mathbf{A}_0 \), and selecting \( K' \) to minimize the deviation between the initial guess and the forward diffusion mean. 

\section{EXPERIMENTAL EVALUATION}
\label{sec:exp}
We demonstrate the effectiveness of Real-Time Iteration Scheme in achieving high accuracy and fast inference across diverse robotics benchmarks, spanning both image-based and state-based control in tasks of varying complexity and horizons. The specific questions we want to address with the experiments are:
\begin{itemize}
    \item To what extent does RTI-DP enhance inference time, and how significant is the speed-up compared to competing methods?
    \item How does RTI-DP compare to existing approaches in terms of performance across different robotics tasks? Will a faster but non-full-denoising inference degrade task performance? 
\end{itemize}

\subsection{Baselines}
RTI is evaluated against several state-of-the-art diffusion-based policies designed to enhance inference speed. Specifically, we compare it with image-based Consistency Policy ~\cite{prasad2024consistencypolicyacceleratedvisuomotor}, which utilizes consistency distillation for rapid sampling, and Streaming Diffusion Policy~\cite{heg2024streamingdiffusionpolicyfast}, which improves inference speed by generating a partially denoised action trajectory. 

Additionally, since Real-Time Scheme is built on a diffusion policy, we include the CNN-based Diffusion Policy~\cite{chi2024diffusionpolicyvisuomotorpolicy} as a baseline. For tasks involving grasping, we evaluate two RTI-DP setups: RTI-DP-scale and RTI-DP-clip. RTI-DP-scale applies scaling at the dataset level, while RTI-DP-clip is used for a pre-trained policy and tested on the same benchmarks as DP.

\subsection{Simulation Experiment}
We assess the performance of RTI-DP across seven tasks spanning three established benchmarks: Robomimic, Push-T, and Block-pushing. These benchmarks are widely used for both visuomotor and state-based policy learning and have been previously tested in studies such as Diffusion Policy~\cite{chi2024diffusionpolicyvisuomotorpolicy} and Consistency Policy~\cite{prasad2024consistencypolicyacceleratedvisuomotor}. 

\subsubsection{Robomimic~\cite{mandlekar2021matterslearningofflinehuman}}
Robomimic is a comprehensive benchmark for assessing robotic manipulation in imitation learning and offline reinforcement learning. It includes five tasks—can, lift, transport, tool hang, and square—encompassing both short and long horizons, single and dual-robot setups, high-precision actions, and scenarios involving single or multiple objects.

\subsubsection{Push-T~\cite{florence2021implicitbehavioralcloning}}
Push-T requires an agent to push a T-shaped block toward a fixed target using a circular end-effector. The task incorporates variability by randomizing the initial positions of both the block and the end-effector. Effective execution demands precise control over contact-rich interactions, utilizing point contacts to guide the block accurately. This task is available in two formats: one based on RGB image observations and another utilizing nine 2D keypoints derived from the ground-truth pose of the T block.

\subsubsection{Multimodal Block Pushing~\cite{shafiullah2022behaviortransformerscloningk} }
This task evaluates a policy’s ability to model multimodal action distributions by requiring the agent to push two blocks into two designated squares in any order.

\subsection{Implementation}
We assess inference time and success rates across three checkpoints, each initialized with different random seeds. All experiments are conducted using an NVIDIA A100-SXM4-40GB GPU, maintaining consistent hardware setup throughout. Policies are trained for a maximum of 48 hours, except for CP, which is trained for double the duration. To minimize data-related variability, all policies are trained using the dataset provided by~\cite{chi2024diffusionpolicyvisuomotorpolicy}. Additionally, DP and RTI-DP-clip are evaluated using the provided checkpoints from~\cite{chi2024diffusionpolicyvisuomotorpolicy}.

\subsection{Results}

The results, presented in Tables~\ref{table_state}, \ref{table_image}, and \ref{table_block}, demonstrate that our methods significantly accelerate inference while maintaining performance comparable to Diffusion Policy with full denoising steps. Compared to other state-of-the-art inference acceleration methods, our approach achieves substantial speedups while preserving a good performance, particularly in precision-critical tasks such as image-based Tool-Hang. Additionally, in tasks with only continuous action spaces, such as Push-T and BlockPush, our approach demonstrates a clear advantage over other acceleration methods, further highlighting its effectiveness and validating our method’s grounding in the continuity of physical systems.

Notably, in state-based tasks such as Push-T and BlockPush, RTI-DP even surpasses DP with full denoising steps, further demonstrating the advantage of our method and underscoring the importance of fully closed-loop control.

Most RTI-DP-scale results achieve effective denoising within just three steps. However, RTI-DP-clip exhibits slightly lower performance than RTI-DP-scale and requires additional time, as scaling directly on initial guesses degraded their quality.

% \subsection{Real world experiment}

\section{DISCUSSION}
% RTI-DP inference acceleration methods offer a significant advantage over existing approaches by achieving substantial speedups without compromising performance.
% A key advantage of our method is its improved robustness, particularly in handling high-frequency, small-magnitude disturbances. Traditional denoising processes may struggle to react quickly to fine-grained environmental changes due to their long denoising process. By starting from a well-informed guess, our approach significantly reduces latency and cutting down response time, allowing the policy to adapt more efficiently to subtle perturbations without sacrificing stability. This is especially crucial in robotic control tasks, where smooth, continuous adjustments are necessary for effective operation.

% Also, our method enhances the stability and smoothness of generated action sequences. Since diffusion models traditionally sample from noise, their sequential denoising process can introduce inconsistencies over time. In contrast, our approach ensures that each iteration starts from a more coherent estimate, leading to smoother transitions and more stable control policies. This improvement is particularly beneficial in continuous control scenarios, where erratic or discontinuous actions can negatively impact performance.

RTI-DP offers a clear advantage over existing approaches by achieving substantial speedups without compromising performance, or requiring retraining, making it well-suited for real-time robotic applications. Beyond diffusion models, similar improvements could extend to flow-based models with a denoising component, further reducing computational overhead while maintaining expressive power. Future research could investigate how to integrate optimized initialization techniques with flow-based models, which might achieve even greater speedups than diffusion models, particularly given their inherent smoothness.

% Similarly to most imitation learning methods, the performance of our method is highly dependent on the quality of the teleoperated data. Following the main assumption of limited action changes along the time frame, our method performs well when the demonstrated motion has little accelerations and constant velocities. The method might not be able to reach the target action within limited inference steps when trying to replicate sudden changes from the expert data. 

While fast inference offers many advantages, some dynamical systems with minimal environmental disturbances—such as static tasks like picking up a cup—do not necessarily benefit from speed improvements, as execution time is not a limiting factor. However, the efficiency of our method creates a larger time budget for prediction with less powerful GPUs. This, in turn, could potentially lower the hardware requirements for visuomotor policies and deployment in edge devices without needing additional training.

For a system with minimal noise, convergence is typically achieved within three steps. However, in environments with significant or unpredictable noise, additional tuning may be necessary. Dynamically adjusting the step sequence—such as selectively skipping steps—can sometimes reduce inference time while preserving performance. Although we provide a method for selecting the initial denoising step, the choice of the parameters still requires tuning. Also, in systems with large action changes, additional denoising steps may be required. Future work could explore strategies for selecting denoising steps that simplify the tuning process.

While most imitation learning methods depend on the quality of the teleoperated data, our method is even more reliant on it. Under the primary assumption of limited action changes over time, our method performs well when the demonstrated motion has low accelerations and relatively constant velocities. However, the method may struggle to reach the target actions within a limited number of inference steps when attempting to replicate sudden changes from the expert data. 

\section{CONCLUSIONS}

In this paper, we present the Real-Time Iteration Scheme for Diffusion Policy, a simple yet efficient approach to accelerating inference in diffusion-based policies through an informed initialization strategy. Using a reduced number of denoising steps, RTI-DP allows low-latency execution while maintaining good policy performance. Our method can be seamlessly integrated with other pre-trained models, enabling fast inference for large-scale pre-trained robotic models. We provide conditions for the contractivity of denoising dynamics and as potential guidelines for selecting the appropriate number of denoising steps. Simulation experiments demonstrate that RTI delivers both rapid inference and good performance simultaneously, without requiring retraining of existing policies.

Beyond diffusion models, our approach could also benefit flow-based models with denoising components, offering further opportunities to enhance inference speed. Additionally, while our method requires multiple steps for denoising, there remains a potential for one-step denoising without altering the policy structure.

\bibliographystyle{IEEEtran}
\bibliography{references}

% Generated by IEEEtran.bst, version: 1.14 (2015/08/26)
\begin{thebibliography}{10}
\providecommand{\url}[1]{#1}
\csname url@samestyle\endcsname
\providecommand{\newblock}{\relax}
\providecommand{\bibinfo}[2]{#2}
\providecommand{\BIBentrySTDinterwordspacing}{\spaceskip=0pt\relax}
\providecommand{\BIBentryALTinterwordstretchfactor}{4}
\providecommand{\BIBentryALTinterwordspacing}{\spaceskip=\fontdimen2\font plus
\BIBentryALTinterwordstretchfactor\fontdimen3\font minus \fontdimen4\font\relax}
\providecommand{\BIBforeignlanguage}[2]{{%
\expandafter\ifx\csname l@#1\endcsname\relax
\typeout{** WARNING: IEEEtran.bst: No hyphenation pattern has been}%
\typeout{** loaded for the language `#1'. Using the pattern for}%
\typeout{** the default language instead.}%
\else
\language=\csname l@#1\endcsname
\fi
#2}}
\providecommand{\BIBdecl}{\relax}
\BIBdecl

\bibitem{poole2022dreamfusiontextto3dusing2d}
\BIBentryALTinterwordspacing
B.~Poole, A.~Jain, J.~T. Barron, and B.~Mildenhall, ``Dreamfusion: Text-to-3d using 2d diffusion,'' in \emph{International Conference on Learning Representations (ICLR)}, 2023. [Online]. Available: \url{https://arxiv.org/abs/2209.14988}
\BIBentrySTDinterwordspacing

\bibitem{reuss2023goalconditionedimitationlearningusing}
\BIBentryALTinterwordspacing
M.~Reuss, M.~Li, X.~Jia, and R.~Lioutikov, ``Goal-conditioned imitation learning using score-based diffusion policies,'' in \emph{Proceedings of Robotics: Science and Systems XIX}, 2023. [Online]. Available: \url{https://arxiv.org/abs/2304.02532}
\BIBentrySTDinterwordspacing

\bibitem{prasad2024consistencypolicyacceleratedvisuomotor}
\BIBentryALTinterwordspacing
A.~Prasad, K.~Lin, J.~Wu, L.~Zhou, and J.~Bohg, ``Consistency policy: Accelerated visuomotor policies via consistency distillation,'' in \emph{Proceedings of Robotics: Science and Systems XX}, 2024. [Online]. Available: \url{https://arxiv.org/abs/2405.07503}
\BIBentrySTDinterwordspacing

\bibitem{ze20243ddiffusionpolicygeneralizable}
\BIBentryALTinterwordspacing
Y.~Ze, G.~Zhang, K.~Zhang, C.~Hu, M.~Wang, and H.~Xu, ``3d diffusion policy: Generalizable visuomotor policy learning via simple 3d representations,'' in \emph{Proceedings of Robotics: Science and Systems XX}, 2024. [Online]. Available: \url{https://arxiv.org/abs/2403.03954}
\BIBentrySTDinterwordspacing

\bibitem{chi2024diffusionpolicyvisuomotorpolicy}
\BIBentryALTinterwordspacing
C.~Chi, Z.~Xu, S.~Feng, E.~Cousineau, Y.~Du, B.~Burchfiel, R.~Tedrake, and S.~Song, ``Diffusion policy: Visuomotor policy learning via action diffusion,'' \emph{The International Journal of Robotics Research}, 2024. [Online]. Available: \url{https://arxiv.org/abs/2303.04137}
\BIBentrySTDinterwordspacing

\bibitem{song2023consistencymodels}
\BIBentryALTinterwordspacing
Y.~Song, P.~Dhariwal, M.~Chen, and I.~Sutskever, ``Consistency models,'' 2023. [Online]. Available: \url{https://arxiv.org/abs/2303.01469}
\BIBentrySTDinterwordspacing

\bibitem{meng2023distillationguideddiffusionmodels}
\BIBentryALTinterwordspacing
C.~Meng, R.~Rombach, R.~Gao, D.~P. Kingma, S.~Ermon, J.~Ho, and T.~Salimans, ``On distillation of guided diffusion models,'' in \emph{the IEEE/CVF Conference on Computer Vision and Pattern Recognition}, 2023. [Online]. Available: \url{https://arxiv.org/abs/2210.03142}
\BIBentrySTDinterwordspacing

\bibitem{heg2024streamingdiffusionpolicyfast}
\BIBentryALTinterwordspacing
S.~H. Høeg, Y.~Du, and O.~Egeland, ``Streaming diffusion policy: Fast policy synthesis with variable noise diffusion models,'' in \emph{Proceedings of the 2025 IEEE International Conference on Robotics and Automation (ICRA)}, 2025. [Online]. Available: \url{https://arxiv.org/abs/2406.04806}
\BIBentrySTDinterwordspacing

\bibitem{chen2025responsivenoiserelayingdiffusionpolicy}
\BIBentryALTinterwordspacing
Z.~Chen, X.~Yuan, T.~Mu, and H.~Su, ``Responsive noise-relaying diffusion policy: Responsive and efficient visuomotor control,'' 2025. [Online]. Available: \url{https://arxiv.org/abs/2502.12724}
\BIBentrySTDinterwordspacing

\bibitem{rti}
\BIBentryALTinterwordspacing
M.~Diehl, H.~G. Bock, and J.~P. Schl\"{o}der, ``A real-time iteration scheme for nonlinear optimization in optimal feedback control,'' \emph{SIAM Journal on Control and Optimization}, vol.~43, no.~5, pp. 1714--1736, 2005. [Online]. Available: \url{https://doi.org/10.1137/S0363012902400713}
\BIBentrySTDinterwordspacing

\bibitem{janner2022planningdiffusionflexiblebehavior}
\BIBentryALTinterwordspacing
M.~Janner, Y.~Du, J.~B. Tenenbaum, and S.~Levine, ``Planning with diffusion for flexible behavior synthesis,'' in \emph{the 39th International Conference on Machine Learning}, 2022. [Online]. Available: \url{https://arxiv.org/abs/2205.09991}
\BIBentrySTDinterwordspacing

\bibitem{weng2024dexdiffusergeneratingdexterousgrasps}
\BIBentryALTinterwordspacing
Z.~Weng, H.~Lu, D.~Kragic, and J.~Lundell, ``Dexdiffuser: Generating dexterous grasps with diffusion models,'' \emph{IEEE Robotics and Automation Letters}, 2024. [Online]. Available: \url{https://arxiv.org/abs/2402.02989}
\BIBentrySTDinterwordspacing

\bibitem{ho2020denoisingdiffusionprobabilisticmodels}
\BIBentryALTinterwordspacing
J.~Ho, A.~Jain, and P.~Abbeel, ``Denoising diffusion probabilistic models,'' in \emph{Advances in Neural Information Processing Systems}, 2020. [Online]. Available: \url{https://arxiv.org/abs/2006.11239}
\BIBentrySTDinterwordspacing

\bibitem{kim2024consistencytrajectorymodelslearning}
\BIBentryALTinterwordspacing
D.~Kim, C.-H. Lai, W.-H. Liao, N.~Murata, Y.~Takida, T.~Uesaka, Y.~He, Y.~Mitsufuji, and S.~Ermon, ``Consistency trajectory models: Learning probability flow ode trajectory of diffusion,'' in \emph{Proceedings of the Twelfth International Conference on Learning Representations (ICLR)}, 2024. [Online]. Available: \url{https://arxiv.org/abs/2310.02279}
\BIBentrySTDinterwordspacing

\bibitem{song2022denoisingdiffusionimplicitmodels}
\BIBentryALTinterwordspacing
J.~Song, C.~Meng, and S.~Ermon, ``Denoising diffusion implicit models,'' in \emph{Proceedings of the 9th International Conference on Learning Representations (ICLR)}, 2021. [Online]. Available: \url{https://arxiv.org/abs/2010.02502}
\BIBentrySTDinterwordspacing

\bibitem{karras2022elucidatingdesignspacediffusionbased}
\BIBentryALTinterwordspacing
T.~Karras, M.~Aittala, T.~Aila, and S.~Laine, ``Elucidating the design space of diffusion-based generative models,'' in \emph{Advances in Neural Information Processing Systems}, 2022. [Online]. Available: \url{https://arxiv.org/abs/2206.00364}
\BIBentrySTDinterwordspacing

\bibitem{rao1998application}
C.~V. Rao, S.~J. Wright, and J.~B. Rawlings, ``Application of interior-point methods to model predictive control,'' \emph{Journal of Optimization Theory and Applications}, vol.~99, no.~3, pp. 723--757, 1998.

\bibitem{mandlekar2021matterslearningofflinehuman}
\BIBentryALTinterwordspacing
A.~Mandlekar, D.~Xu, J.~Wong, S.~Nasiriany, C.~Wang, R.~Kulkarni, L.~Fei-Fei, S.~Savarese, Y.~Zhu, and R.~Martín-Martín, ``What matters in learning from offline human demonstrations for robot manipulation,'' in \emph{the 5th Conference on Robot Learning}, 2021. [Online]. Available: \url{https://arxiv.org/abs/2108.03298}
\BIBentrySTDinterwordspacing

\bibitem{florence2021implicitbehavioralcloning}
\BIBentryALTinterwordspacing
P.~Florence, C.~Lynch, A.~Zeng, O.~Ramirez, A.~Wahid, L.~Downs, A.~Wong, J.~Lee, I.~Mordatch, and J.~Tompson, ``Implicit behavioral cloning,'' in \emph{Proceedings of the 5th Conference on Robot Learning (CoRL)}, 2021. [Online]. Available: \url{https://arxiv.org/abs/2109.00137}
\BIBentrySTDinterwordspacing

\bibitem{shafiullah2022behaviortransformerscloningk}
\BIBentryALTinterwordspacing
N.~M.~M. Shafiullah, Z.~J. Cui, A.~Altanzaya, and L.~Pinto, ``Behavior transformers: Cloning $k$ modes with one stone,'' in \emph{Advances in Neural Information Processing Systems}, 2022. [Online]. Available: \url{https://arxiv.org/abs/2206.11251}
\BIBentrySTDinterwordspacing

\end{thebibliography}

\addtolength{\textheight}{-12cm}   % This command serves to balance the column lengths
                                  % on the last page of the document manually. It shortens
                                  % the textheight of the last page by a suitable amount.
                                  % This command does not take effect until the next page
                                  % so it should come on the page before the last. Make
                                  % sure that you do not shorten the textheight too much.

%%%%%%%%%%%%%%%%%%%%%%%%%%%%%%%%%%%%%%%%%%%%%%%%%%%%%%%%%%%%%%%%%%%%%%%%%%%%%%%%

%%%%%%%%%%%%%%%%%%%%%%%%%%%%%%%%%%%%%%%%%%%%%%%%%%%%%%%%%%%%%%%%%%%%%%%%%%%%%%%%

%%%%%%%%%%%%%%%%%%%%%%%%%%%%%%%%%%%%%%%%%%%%%%%%%%%%%%%%%%%%%%%%%%%%%%%%%%%%%%%%
% \section*{APPENDIX}

% Appendixes should appear before the acknowledgment.

% \section*{ACKNOWLEDGMENT}

% The preferred spelling of the word ÒacknowledgmentÓ in America is without an ÒeÓ after the ÒgÓ. Avoid the stilted expression, ÒOne of us (R. B. G.) thanks . . .Ó  Instead, try ÒR. B. G. thanksÓ. Put sponsor acknowledgments in the unnumbered footnote on the first page.

%%%%%%%%%%%%%%%%%%%%%%%%%%%%%%%%%%%%%%%%%%%%%%%%%%%%%%%%%%%%%%%%%%%%%%%%%%%%%%%%

\end{document}